\documentclass[]{ant_tech_report}

\usepackage{wrapfig}
\usepackage{float}

\titleformat*{\paragraph}{\bfseries\itshape}

\makeatletter

\let\c@proposition\relax
\makeatother
\newtheorem{proposition}{Proposition}
\newtheorem*{proposition*}{Proposition}

\title{Rethinking Cross-Layer Information Routing in Diffusion Transformers}

\author{%
\parbox{\textwidth}{\centering
Chao Xu$^{*,2}$,
Maohua Li$^{*,1,2,\sharp}$,
Qirui Li$^{2,3,\sharp}$,
Yixuan Xu$^{2}$,
Yanke Zhou$^{1,2,\sharp}$,
Yunhe Li$^{2,4,\sharp}$\\[1mm]
Cuifeng Shen$^{2}$,
Hanlin Tang$^{\dagger,2}$,
Kan Liu$^{2}$,
Tao Lan$^{2}$,
Lin Qu$^{2}$,
Shao-Qun Zhang$^{\S,1}$
}}

\affiliation{%
\parbox{\textwidth}{\centering\small
$^1$Nanjing University \quad $^2$Alibaba Group \quad $^3$Zhejiang University \quad $^4$City University of Hong Kong
}}
\renewcommand{\contributionlist}{\vspace{-4mm}}

\newcommand{\authorfootnotes}{%
  \begingroup
  \renewcommand{\thefootnote}{\fnsymbol{footnote}}%
  \footnotetext[1]{Equal contribution. Chao Xu initiated the project and built the codebase and infra; Maohua Li refined the idea, led the paper writing and REPA-related work. \quad \textsuperscript{$\dagger$}Project lead \quad \textsuperscript{\S}Corresponding author \quad $^\sharp$Work done during internship at Alibaba}%
  \endgroup
}
\checkdata[E-mail]{wanghaisheng.whs@alibaba-inc.com, zhangsq@lamda.nju.edu.cn}

\abstract{
Diffusion Transformers (DiTs) have become a de facto backbone of modern visual generation, and nearly every major axis of their design --- tokenization, attention, conditioning, objectives, and latent autoencoders --- has been extensively revisited. The residual stream that governs how information accumulates across layers, however, has been directly inherited from the original Transformer.
In this paper, we present a systematic empirical analysis of cross-layer information flow in DiTs, jointly along depth and denoising timestep, and identify three concrete symptoms of traditional residual addition, namely monotonic forward magnitude inflation, sharp backward gradient decay, and pronounced block-wise redundancy. Motivated by this diagnosis, we propose Diffusion-Adaptive Routing (\textsc{DAR}), a drop-in residual replacement that performs \emph{learnable, timestep-adaptive, and non-incremental} aggregation over the history of sublayer outputs. Moreover, the proposed \textsc{DAR} is compatible with many modern Transformer enhancement methods, such as REPA. On ImageNet $256\times256$, \textsc{DAR} improves SiT-XL/2 by $2.11$ FID ($7.56$ vs.\ $9.67$) and matches the baseline's converged quality with $8.75\times$ fewer training iterations. Stacked on top of REPA, it yields a $2\times$ training acceleration in the early stage, suggesting cross-layer information routing as an underexplored design axis in diffusion modeling, one that operates orthogonally to existing representation-alignment objectives. Beyond pretraining, \textsc{DAR} can also be applied during the fine-tuning stage of large-scale T2I models and preserves high-frequency details during Distribution Matching Distillation.
}

\begin{document}

\maketitle
\authorfootnotes

\vspace{-1.7em}
\begin{center}
\captionsetup{type=figure}
\centering

\begin{minipage}{0.84\textwidth}
\centering

\begin{subfigure}[t]{0.48\linewidth}
\vspace{13pt}
\centering
\hspace*{5pt}%
\includegraphics[width=0.98\linewidth]{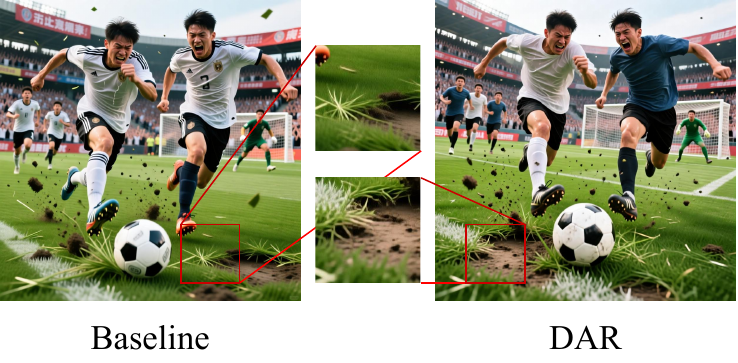}
\caption{DMD visual comparison}
\label{fig:visual_comparison}
\end{subfigure}
\hfill
\begin{subfigure}[t]{0.48\linewidth}
\vspace{0pt}
\centering
\includegraphics[width=\linewidth]{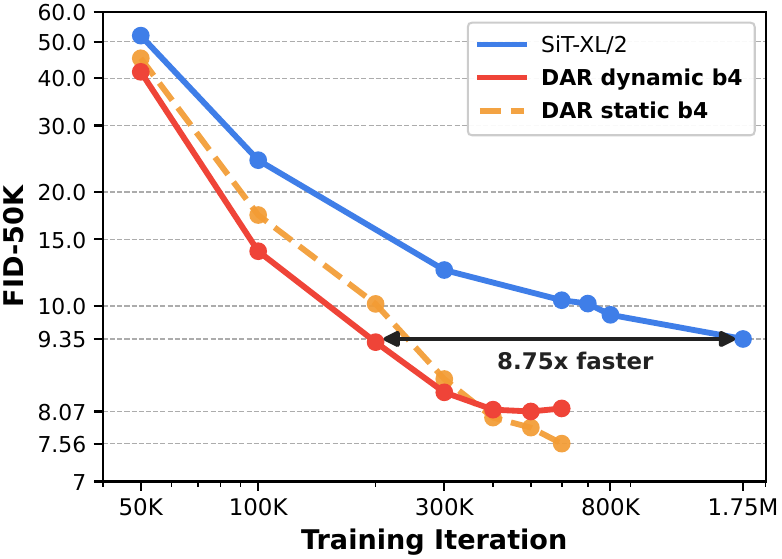}
\caption{Training FID curves}
\label{fig:fid_curve}
\end{subfigure}

\end{minipage}

\caption{Overview of the main empirical results. (a) Our method preserves high-frequency details during DMD. (b) Training FID curves on ImageNet $256\times256$.}
\label{fig:fid_training_curve}
\end{center}

\vspace{-0.5em}
\section{Introduction}  \label{sec:introduction}
Advances in the design and optimization of Diffusion Transformers (DiTs) that replace convolutional U-Nets with token-based Transformer denoisers~\citep{peebles2023scalable} have led to significant breakthroughs in modern visual generation tasks~\citep{wu2025qwen,kong2024hunyuanvideo,flux2024,hacohen2026ltx,cai2025z,seedream2025seedream}. A central challenge for modern visual generation with DiTs is to capture the time-varying dynamics of the denoising process by developing architectural innovations. Recent years have seen extensive efforts devoted to key components of DiTs, including macro structure design~\citep{bao2023all,peebles2023scalable,esser2024scaling,li2024hunyuandit}, attention mechanisms~\citep{xie2024sana,chen2023pixart,peebles2023scalable}, conditioning mechanisms~\citep{tan2025ominicontrol,zhang2025easycontrol}, learning objectives~\citep{yu2025repa,leng2025repae}, latent autoencoders~\citep{yao2025reconstruction,chen2024deep,zheng2025diffusion}, and causal and autoregressive DiTs~\citep{deng2024causal,huang2025self,cheng2025playing}. However, the pre-normalized residual stream in DiTs and its variants --- a fundamental design inherited from standard NLP practice --- has remained largely unchanged, leaving open the question of its role in governing cross-layer information accumulation during the time-varying denoising process.

This work starts with an in-depth investigation of cross-layer information routing in DiTs, jointly along depth and denoising timestep. On the one hand, our analysis suggests that this seemingly innocuous default residual addition in DiTs gives rise to three symptoms that emerge in lockstep with depth: hidden-state magnitudes inflate monotonically, backward gradients decay sharply, and adjacent transformer blocks become increasingly redundant, as shown in Fig.~\ref{fig:diagnostic}. Strikingly, these symptoms collectively echo the \emph{PreNorm dilution} phenomenon~\citep{xiong2020layer} recently characterized in Large Language Models (LLMs)~\citep{team2026attention,li2026siamesenorm}. On the other hand, cross-layer information flow within DiTs is inherently time-varying: as denoising progresses across a continuum of noise levels, the intermediate representations that matter most should shift from coarse-structure features in high-noise regimes to fine-detail features in low-noise regimes~\citep{ho2020denoising,sclocchi2025phase}. Thus, the \emph{fixed, time-agnostic, and uniform-weighted} aggregation, as in conventional LLMs, is poorly suited to DiTs.

\begin{figure*}[t]
\centering
\includegraphics[width=\textwidth]{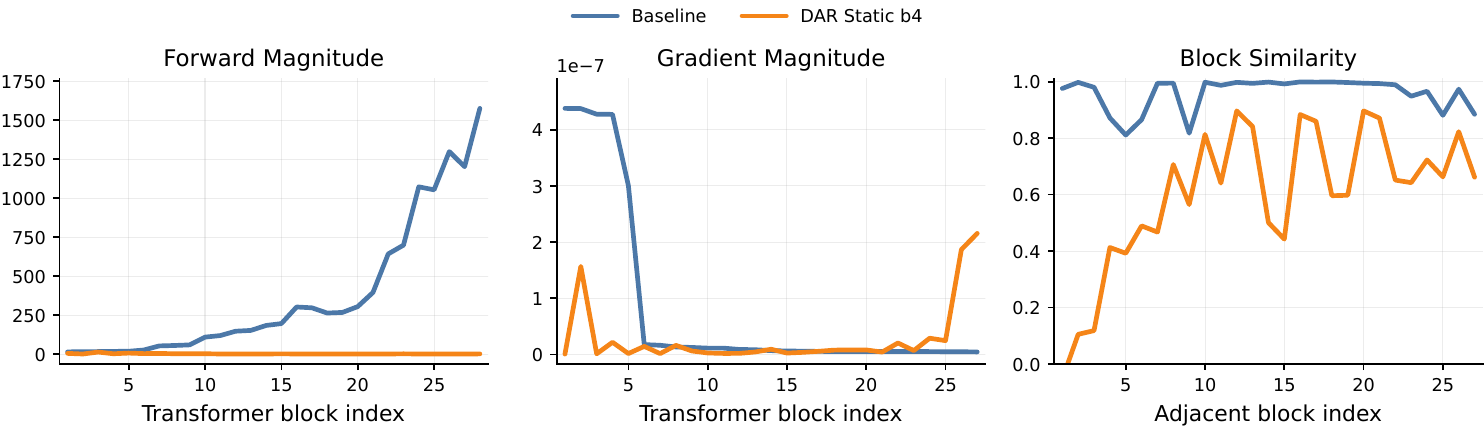}
\vspace{-0.8em}
\caption{Three diagnostic symptoms of standard residual routing in DiTs across depth: forward magnitude inflation, backward gradient decay, and block-wise redundancy. Measured at $t{=}1.0$.}
\label{fig:diagnostic}
\vspace{-1em}
\end{figure*}

Several works have revisited the depth-wise structure of DiTs. A representative line of research~\citep{bao2023all,tian2024u,chen2025towards,li2024hunyuandit} grafts U-Net-style long skip connections onto DiTs to bridge shallow and deep layers, with the goal of restoring the fixed hierarchical inductive bias of U-Nets, rather than enabling dynamic and timestep-aware aggregation across layers. Our key insight is that the denoising timestep --- the very dimension that distinguishes DiTs from a standard Transformer --- should play a vital role in adaptive routing. This motivates depth-wise aggregation mechanisms in DiTs to be \emph{learnable, timestep-adaptive, and non-incremental}, so as to capture time-varying dynamics.

Building on the above insights, this work elevates cross-layer information routing in DiTs from an inherited convention to an explicit design axis, with contributions on two complementary fronts.

\textbf{On the diagnostic side}, we conduct, to the best of our knowledge, the first systematic study of cross-layer information flow in DiTs, decomposed jointly by depth and denoising timestep. We reveal that the three symptoms identified above and illustrated in Fig.~\ref{fig:diagnostic} persist throughout training and vary systematically with the noise level, thereby suggesting that the role of the pre-normalized residual stream extends beyond stabilizing deep training, and exposing a spatiotemporal structure of PreNorm dilution that is invisible to LLM-side analyses.

\textbf{On the methodological side}, we propose Diffusion-Adaptive Routing (\textsc{DAR}), a drop-in residual replacement that performs \emph{learnable, timestep-adaptive, and non-incremental} aggregation. Inspired by~\citep{team2026attention}, we replace the running residual at each sublayer with a softmax attention over preceding sublayer outputs, where the query is computed from the current adaLN-modulated hidden state, allowing the routing mechanism to inherit both content and timestep dependence from DiT's existing conditioning pathway. This preserves the isotropic and homogeneous Transformer stack without introducing manually specified layer pairing, and remains compatible with modern Transformer enhancement methods, such as REPA~\citep{yu2025repa}.

Empirically, on ImageNet 256$\times$256, \textsc{DAR} consistently outperforms vanilla SiT in our experiments, achieving $7.56$ FID with SiT-XL/2 ($2.11\downarrow$ over the baseline) while matching the baseline's converged quality in roughly $8.75 \times$ fewer training iterations. Critically, the gains of \textsc{DAR} are orthogonal to representation-alignment objectives: combining \textsc{DAR} with REPA~\citep{yu2025repa} yields a $2\times$ training acceleration in the early stage over REPA alone. This suggests that cross-layer information routing is a promising and underexplored direction for improving diffusion models, complementary to existing learning objectives. Quantitatively, the three dilution symptoms identified by our diagnosis tighten in lockstep with these FID gains, linking the diagnostic findings to the observed performance gains.

Overall, the main contributions of this paper are summarized as follows
\begin{itemize}
    \item We conduct, to the best of our knowledge, the first comprehensive investigation of the cross-layer information flow in DiTs along both depth and denoising timestep and identify three concrete symptoms of the prevailing residual structure in DiTs, that is, \emph{forward magnitude inflation}, \emph{backward gradient decay}, and \emph{block-wise redundancy}.
    \item We propose \textsc{DAR}, a drop-in residual replacement for DiTs that performs \emph{learnable, timestep-adaptive, and non-incremental} aggregation. The design operates purely along the depth dimension, preserving the isotropic and homogeneous Transformer stack, and remains compatible with many modern Transformer enhancement methods, such as REPA.
    \item Our method improves both convergence speed and final quality of diffusion transformers: on SiT, we achieve $8.75 \times$ faster training and a $2.11$ FID improvement over the baseline. Stacked on top of REPA~\citep{yu2025repa}, it yields a $2\times$ training acceleration in the early stage over REPA alone, demonstrating that depth-wise routing operates synergistically with existing representation-alignment objectives.
\end{itemize}
The rest of this paper is organized as follows. Section~\ref{sec:rw} reviews previous studies related to this work. Section~\ref{sec:diagnose} presents an in-depth investigation of cross-layer information flow in DiTs. Section~\ref{sec:dar} introduces DAR. Section~\ref{sec:experiment} conducts experiments to demonstrate the effectiveness of our proposed DAR. Section~\ref{sec:conclusions} concludes this work.

\section{Related Work}  \label{sec:rw}

This section reviews seminal studies on cross-layer information routing and DiT architectures. An extended discussion is provided in Appendix~\ref{app:relatedworks}.
\vspace{-1em}
\paragraph{Evolution of Cross-Layer Information Routing.} Cross-layer information routing in deep networks begins with standard residual connections, where layers communicate through fixed additive recursion~\citep{he2016deep,srivastava2015highway}. Subsequent work mainly improves this residual pathway for optimization stability, including gated or scaled variants such as ReZero~\citep{bachlechner2021rezero}, LayerScale~\citep{touvron2021going}, and DeepNorm~\citep{wang2024deepnet}, which adjust residual strength without fundamentally changing the routing topology. Beyond single-stream propagation, Hyper-Connections~\citep{zhu2024hyper} introduces multi-stream recurrence with learned mixing, which mHC~\citep{xie2025mhc} subsequently refines by imposing doubly stochastic constraints on the mixing for more stable signal propagation at scale. In parallel, another line of work grants layers more direct access to earlier representations, from dense connectivity in DenseNet~\citep{huang2017densely} to learned depth aggregation in DenseFormer~\citep{pagliardini2024denseformer} and explicit depth-wise softmax attention in Attention Residuals~\citep{team2026attention}. Overall, prior studies show a clear transition from fixed residual recursion toward learned, selective, and increasingly dynamic routing across depth. Despite the rapid architectural evolution of generative Transformers, the depth-wise routing dimension remains far less explored than these architectural developments.

\paragraph{Evolution of Diffusion Transformers.} DiTs have evolved from ViT-style U-Net replacements to specialized architectures for scalable generation. U-ViT shows that noisy image patches, timesteps, and conditions can be treated as tokens in a Transformer denoiser while retaining long skip connections~\citep{bao2023all}. DiTs further simplify this design into a pure latent-space Transformer and establish clear scaling behavior~\citep{peebles2023scalable}. Subsequent work has mainly progressed along two directions. One improves multimodal fusion and conditioning. For example, PixArt~\citep{chen2024pixart, chen2023pixart, chen2024pixartdelta} retains conventional cross-attention, whereas MM-DiT~\citep{esser2024scaling} shifts to a unified self-attention framework. This trend also accompanies the adoption of stronger language models as condition encoders: Lumina-T2X~\citep{gao2024lumina}, Playground v3~\citep{Liu2024PlaygroundVI}, and Sana~\citep{xie2024sana} use decoder-only LLMs as text encoders, while Qwen-Image~\citep{wu2025qwen} further extends this design with a vision-language encoder. The other direction advances generative formulations and training objectives. SiT~\citep{ma2024sit} unifies diffusion- and flow-based objectives, while Stable Diffusion 3~\citep{esser2024scaling} stresses rectified-flow training at scale. Notably, REPA~\citep{yu2025repa} accelerates DiT training by introducing a representation-alignment objective that aligns hidden states of DiTs with pretrained visual representations. Overall, the recent evolution of DiTs has focused heavily on backbone scaling, conditioning pathways, and training objectives, whereas the residual pathway itself has remained largely unchanged.



\section{Diagnosing Cross-Layer Information Flow in DiTs}   \label{sec:diagnose}

In this section, we provide an empirical investigation of cross-layer information routing in DiTs, jointly along depth and denoising timestep. 
We analyze two models: a vanilla SiT-XL/2 baseline and a static variant of \textsc{DAR} with chunk size $S = 4$. Both models are checkpointed after $600\text{K}$ training iterations, and diagnostics are computed on $4096$ ImageNet samples. For each transformer block $k \in \{1, \ldots, 28\}$, we record three statistics of its output hidden state $z_k$. The first is the forward magnitude $\mathrm{RMS}(z_k)$ (root-mean-square of the feature values, averaged over batch and tokens). The second is the backward gradient magnitude $\mathrm{RMS}(\partial\mathcal{L}/\partial z_k)$, where $\mathcal{L}$ is the velocity-prediction MSE used for SiT training. The third is the block similarity $\cos(z_k, z_{k+1})$, defined as the per-token cosine similarity between consecutive block outputs averaged over batch and tokens. For \textsc{DAR}, we use $z_k$ to denote the aggregated state passed to block $k+1$; when $k = 28$, $z_k$ denotes the final aggregated state fed to the prediction head.

\begin{figure*}[t]
\centering
\includegraphics[width=\textwidth]{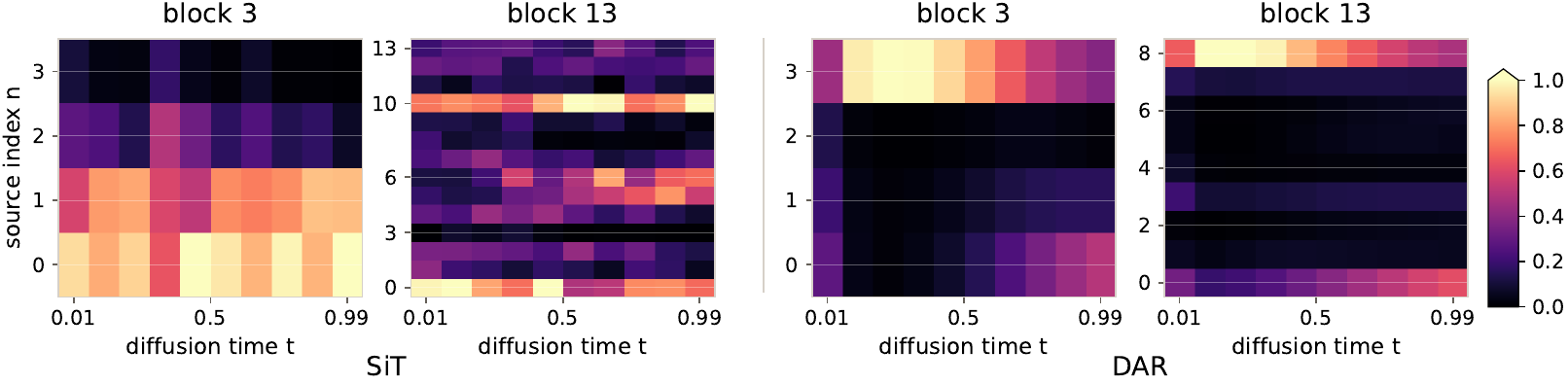}
\vspace{-0.8em}
\caption{Source-mixing patterns across denoising timesteps. For the SiT model with standard residual routing, we add measurement-only scalar gates initialized to $1$ on historical residual sources, keep the forward pass unchanged, and plot loss gradients with respect to the gates as counterfactual source importance; for \textsc{DAR}, we plot the learned softmax routing weights.}
\label{fig:baseline_source_mixing}
\vspace{-1.2em}
\end{figure*}

Fig.~\ref{fig:diagnostic} plots the forward hidden-state magnitude, backward gradient magnitude, and block-wise similarity as functions of the Transformer block index. The blue curves, corresponding to the standard residual baseline, reveal three diagnostic symptoms that all intensify with depth. The forward hidden-state magnitude grows monotonically from $\sim 15.5$ at block 1 to $\sim 1576$ at block 28, corresponding to roughly $100\times$ inflation. Combined with the unit-RMS normalization applied at each block input, this growth forces deeper blocks to produce ever-larger raw outputs in order to retain influence over the residual stream, echoing the \emph{PreNorm dilution} phenomenon characterized in LLMs~\citep{xiong2020layer,team2026attention,li2026siamesenorm}. The backward gradient magnitude drops sharply after the first five blocks. Early blocks receive substantial signal ($\sim 5 \times 10^{-7}$), whereas later blocks are lower by more than an order of magnitude and remain close to zero throughout the deep stack. This pattern suggests that the standard residual pathway provides limited control over gradient flow, leaving deeper layers with substantially weaker optimization signals. The per-token cosine similarity between consecutive block outputs stays above $0.9$ throughout the deep stack, indicating that neighboring deep blocks produce highly similar representations. This high similarity suggests substantial representational redundancy under the standard residual routing.

We next probe the timestep dimension, the key axis that distinguishes DiTs from standard Transformers. For \textsc{DAR}, the softmax weights $\alpha_{i\to l}(t)$ are directly observable; for the SiT baseline, which exposes no router by construction, we attach a scalar gate initialized to $1$ on each historical residual source and read out the gradient of the denoising loss with respect to that gate as a counterfactual importance of how a baseline-equivalent router would reweight each source if one existed, while keeping the forward pass numerically identical to the unmodified baseline. Fig.~\ref{fig:baseline_source_mixing} visualizes both quantities at a shallow and a deep location, and two observations stand out. Although the baseline never sees a router during training, its counterfactual importance map already varies systematically along $t$ at both depths, with the preferred sources at high noise differing visibly from those at low noise --- the standard residual stack exhibits timestep-dependent source preferences, suggesting the value of timestep-conditioned aggregation. \textsc{DAR}'s learned weights provide the missing degree of freedom suggested by this diagnostic: the softmax concentrates sharply on a small subset of historical sources, and this selection itself shifts smoothly with $t$ at both shallow and deep blocks, confirming that timestep-adaptive cross-layer routing is not an externally imposed inductive bias but a latent need of the DiT residual pathway that \textsc{DAR} directly meets.

Taken together, these findings point to an inherent rigidity in standard residual routing, which is associated with three issues: PreNorm dilution driven by residual-stream magnitude growth~\citep{nguyen2019transformers,xiong2020layer,li2026siamesenorm,team2026attention}, imbalanced gradient propagation across depth~\citep{team2026attention,xie2025mhc,zhu2024hyper}, and high feature similarity and redundancy~\citep{jiang2024tracing,song2024sleb,men2025shortgpt,chen2026sortblock}. These observations suggest that standard residuals provide cross-layer propagation, but lack adaptive control over which previous representations should be emphasized or suppressed.

\section{Exploring Cross-Layer Interaction Spaces in DiTs}   \label{sec:dar}

\subsection{Cross-layer Routing in DiTs}

\begin{wrapfigure}{r}{0.43\textwidth}
\vspace{-0.6em}
\centering
\includegraphics[width=0.41\textwidth]{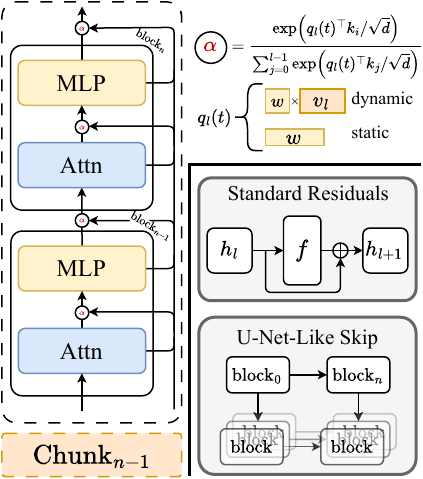}
\caption{Overview of the proposed Diffusion-Adaptive Routing (\textsc{DAR}) and previous methods.}
\label{fig:dar}
\vspace{-0.8em}
\end{wrapfigure}

Motivated by the diagnostic results, we revisit how existing DiT architectures route information. Rather than viewing cross-layer information routing as a post-hoc architectural add-on, we treat it as a fundamental design dimension that is already implicitly instantiated in DiTs.

\paragraph{Standard residual routing in DiTs.}
Standard DiTs inherit the residual routing of the original Transformer. For clarity, we treat each self-attention or MLP sublayer as an individual transformation:
\begin{equation}
    h_{l+1} = h_l + f_l(h_l; t) \ ,
\end{equation}
where $h_l \in \mathbb{R}^{T \times d}$ denotes the hidden token sequence entering sublayer $l$, $t$ is the diffusion or flow timestep, and $f_l$ is the corresponding attention or MLP transformation. We omit the conditioning signal for simplicity. Unrolling the recurrence gives
\begin{equation}
    h_l = h_0 + \sum_{i=0}^{l-1} f_i(h_i; t) \ .
\end{equation}
Standard DiTs already perform a form of cross-layer information routing. However, this routing pattern is fixed, since all previous outputs enter the residual stream with unit coefficients. Thus, standard DiTs cannot explicitly decide which earlier representations should be retrieved or suppressed at a given depth or denoising stage.

\paragraph{U-Net-like skip routing.}
Previous works~\citep{bao2023all,tian2024u,chen2025towards,li2024hunyuandit} introduce a U-Net-like routing pattern for diffusion models. Abstractly, for a deep layer $l$, U-Net-like long skip routing augments its input with a paired shallow representation
\begin{equation}
    \tilde{h}_l = \psi_l\left(h_{l}, h_{\pi(l)}\right) \ ,
\end{equation}
where $\pi(l) < l$ indexes the corresponding shallow layer and $\psi_l$ denotes the skip-fusion operation. The layer update can then be written as
\begin{equation}
    h_{l+1} = \tilde{h}_l + f_l(\tilde{h}_l; t) \ .
\end{equation}
From a routing perspective, U-Net-like skip routing shows that diffusion Transformers can benefit from multi-level feature fusion. Nevertheless, the routing topology in U-Net-like skip routing remains manually specified, and this connection pattern weakens the homogeneity that makes Transformers naturally scalable.

\subsection{Diffusion-Adaptive Routing}
\label{sec:DAR}

Drawing on the recently proposed Attention Residuals (AttnRes) framework~\citep{team2026attention}, which replaces fixed residual accumulation with softmax attention over depth, we instantiate Diffusion-Adaptive Routing (\textsc{DAR}) for DiTs with several design choices tailored to the diffusion setting. Let $v_i = f_i(h_i; t)$ denote the output of the $i$-th sublayer with $v_0 = h_0$ the input embedding. In contrast to the standard residual routing that accumulates these sources into a single running stream $h_l = h_0 + \sum_{i<l} v_i$ with unit weights, the proposed \textsc{DAR} replaces the unweighted sum with a softmax-weighted aggregation
\begin{equation} 
    h_l = \sum_{i=0}^{l-1} \alpha_{i \to l}(t)\, v_i 
    \quad\text{with}\quad
    \alpha_{i \to l}(t) = \frac{\exp\bigl(q_l(t)^{\top} k_i / \sqrt{d}\bigr)}{\sum_{j=0}^{l-1} \exp\bigl(q_l(t)^{\top} k_j / \sqrt{d}\bigr)} \ ,
\end{equation}
where $k_i = \mathrm{RMSNorm}(v_i)$ is the key associated with source $v_i$, and the softmax is computed over the source set $\mathcal{S}_l = \{v_0, v_1, \ldots, v_{l-1}\}$. The aggregated $h_l$ then enters the sublayer transformation following $v_l = f_l(h_l; t)$.

\paragraph{Query parameterization.} The per-layer query $q_l(t)$ admits two natural choices
\begin{equation}
    q_l(t) = \begin{cases}
        w_l \ , & \text{(static)} \\[2pt]
        W_q^{(l)}\, v_{l-1} \ , & \text{(dynamic)}
    \end{cases}
\end{equation}
where $w_l \in \mathbb{R}^d$ is a layer-specific learnable vector and $W_q^{(l)} \in \mathbb{R}^{d \times d}$ is a layer-specific projection. Notably, this is a sharp departure from the LLM-side observation in AttnRes, where the dynamic variant improves only marginally over the static one. We attribute this departure to the diffusion timestep dimension unique to DiTs, which is a structural feature absent in the LLM setting and fundamentally reshapes how the per-layer query should be conditioned. We elaborate on this point below.

\paragraph{Timestep injection.} Concretely, the main difference between static and dynamic query parameterization lies in how $t$ enters the per-layer query. The former keeps $w_l$ time-independent by construction, whereas the latter injects $t$ implicitly since the network input $x_t$ is itself a noised latent and further amplified at each sublayer through DiT's adaLN-Zero conditioning pathway. Additionally, we consider an explicit injection variant that augments $w_l$ with the timestep embedding $e(t)$ reused from DiT's existing $t$-embedder, i.e., $q_l(t) = w_l + e(t)$ at no additional parameter cost. The final layer of the $t$-embedder is zero-initialized, so that $e(t)=0$ at initialization and the model exactly recovers the pure static variant at the start of training. Overall, this yields three query variants of timestep injection: pure static, explicit timestep injection, and dynamic. A more detailed comparison that disentangles timestep awareness from input dependence is provided in Section~\ref{sec:time}.

\paragraph{Chunked aggregation.} Retaining all $L$ source vectors increases the activation footprint linearly with depth. To reduce this cost, we support a chunked variant that partitions the $L$ sublayers into $N$ chunks of size $S = L/N$. Each chunk $n$ is summarized by a single representation $c_n := v_{nS}$, i.e., the output of its last sublayer. For any sublayer $l$ in chunk $n$, the source set is replaced by
\begin{equation}
\mathcal{S}_l \;=\; \underbrace{\{c_0, c_1, \ldots, c_{n-1}\}}_{\text{prior chunk summaries}} \;\cup\; \underbrace{\{v_{(n-1)S+1}, \ldots, v_{l-1}\}}_{\text{current intra-chunk sources}} \ ,
\end{equation}
which consists of summaries from previous chunks together with the full set of sources within the current chunk. The softmax aggregation therefore runs over $|\mathcal{S}_l| \le S + N$ sources, reducing source memory from $O(Ld)$ to $O((S + N)d)$. Our chunked design differs from the LLM-side instantiation of AttnRes in both the choice of chunk summary and the chunk size. We analyze this gap and its dependence on depth in Appendix~\ref{app:final_aggregator} and Section~\ref{sec:chunk}.

\section{Experiments}   \label{sec:experiment}

\subsection{Setup}

\paragraph{Implementation details.} Unless otherwise specified, we follow the setup of SiT~\citep{ma2024sit}. We train on ImageNet-1K~\citep{russakovsky2015imagenet} at $256 \times 256$ resolution, with the same data preprocessing as SiT. To ensure a fair comparison, we retrain SiT-XL/2 from scratch under the identical training recipe. For experiments combining DAR with REPA~\citep{yu2025repa}, we follow the original REPA configuration. We do not use any additional training tricks beyond those in the original SiT recipe. Full hyperparameters and experimental details are provided in Appendix~\ref{impl}.

\paragraph{Evaluation.} We report Fréchet Inception Distance (FID;~\citep{heusel2017gans}), Inception Score (IS;~\citep{salimans2016improved}), spatial Fréchet Inception Distance (sFID;~\citep{nash2021generating}), precision and recall (Pre. and Rec.~\citep{kynkaanniemi2019improved}), computed using 50,000 samples. We use both ODE and SDE samplers as in SiT, with 250 function evaluations by default; unless otherwise specified, we report ODE results.

\newcommand{\best}[1]{\textbf{#1}}
\newcommand{\second}[1]{\underline{#1}}

\begin{table*}[t]
\centering
\setlength{\tabcolsep}{3.0pt}
\renewcommand{\arraystretch}{1.12}
\small
\resizebox{\textwidth}{!}{
\begin{tabular}{lcc|ccccc|ccccc}
\toprule
\multirow{2}{*}{\textbf{Method}}
& \multirow{2}{*}{\textbf{Iters.}}
& \multirow{2}{*}{\textbf{Params}}
& \multicolumn{5}{c|}{\textbf{w/o guidance}}
& \multicolumn{5}{c}{\textbf{w/ guidance}} \\
\cmidrule(lr){4-8} \cmidrule(lr){9-13}
& & 
& FID$\downarrow$ & sFID$\downarrow$ & IS$\uparrow$ & Prec.$\uparrow$ & Rec.$\uparrow$
& FID$\downarrow$ & sFID$\downarrow$ & IS$\uparrow$ & Prec.$\uparrow$ & Rec.$\uparrow$ \\
\midrule

\multicolumn{13}{l}{\textit{\textbf{Standard Residuals}}} \\
\midrule

DiT$_{\textsc{ode}}$
& \multirow{2}{*}{1.75M}
& \multirow{2}{*}{675M}
& 10.58 & 5.64 & 114.2 & 0.65 & 0.67
& 2.25 & 4.56 & 239.2 & 0.80 & 0.59 \\

DiT$_{\textsc{sde}}$
& 
& 
& 9.62 & -- & 121.5 & 0.67 & 0.67
& 2.27 & -- & 278.2 & 0.83 & 0.57 \\

SiT$_{\textsc{ode}}$
& \multirow{2}{*}{1.75M}
& \multirow{2}{*}{675M}
& 9.67 & 6.40 & 124.1 & 0.66 & 0.68
& 2.15 & 4.60 & 258.1 & 0.81 & 0.60 \\

SiT$_{\textsc{sde}}$
& 
& 
& 8.61 & 6.32 & 131.7 & 0.68 & 0.67
& 2.06 & 4.49 & 270.3 & 0.82 & 0.59 \\

SiT-Plus$_{\textsc{ode}}$
& \multirow{2}{*}{1M}
& \multirow{2}{*}{752M}
& 10.85 & 5.57 & 115.2 & 0.66 & 0.67
& 2.36 & 4.40 & 244.6 & 0.80 & 0.58 \\

SiT-Plus$_{\textsc{sde}}$
& 
& 
& 10.02 & 5.18 & 119.9 & 0.68 & 0.66
& 2.34 & 4.56 & 254.1 & 0.82 & 0.57 \\

\midrule
\multicolumn{13}{l}{\textit{\textbf{U-Net-Like Routing}}} \\
\midrule

U-ViT-H/2$_{\textsc{ode}}$
& 500K
& 585M
& -- & -- & -- & -- & --
& 2.29 & 5.68 & 263.9 & 0.82 & 0.57 \\

U-DiT-L$_{\textsc{sde}}$
& 250K
& 810M
& 7.54 & 5.27 & 135.5 & 0.70 & 0.66
& 3.00 & 4.40 & 286.6 & \textbf{0.86} & 0.52 \\

\midrule
\multicolumn{13}{l}{\textit{\textbf{Our Method}}} \\
\midrule

Static c4$_{\textsc{ode}}$
& \multirow{2}{*}{600K}
& \multirow{2}{*}{675M}
& 7.56 & 5.18 & 131.1 & 0.69 & 0.68
& 2.08 & 4.42 & 272.9 & 0.83 & \textbf{0.61} \\

Static c4$_{\textsc{sde}}$
& 
& 
& \textbf{6.92} & 5.27 & \textbf{138.8} & 0.70 & 0.67
& 2.23 & 4.49 & \textbf{287.0} & 0.84 & 0.57 \\

Dynamic c4$_{\textsc{ode}}$
& \multirow{2}{*}{500K}
& \multirow{2}{*}{751M}
& 8.07 & \textbf{5.07} & 129.0 & 0.68 & \textbf{0.69}
& \textbf{2.05} & \textbf{4.39} & 270.1 & 0.82 & 0.60 \\

Dynamic c4$_{\textsc{sde}}$
& 
& 
& 7.39 & 5.20 & 134.7 & \textbf{0.71} & 0.67
& 2.17 & 4.49 & 284.8 & 0.83 & 0.57 \\

\bottomrule
\end{tabular}
}
\caption{
System-level comparison on ImageNet $256 \times 256$ generation, with and without classifier-free guidance~\citep{ho2022classifier}, with the best results marked in \textbf{bold}. We use CFG with $w=1.5$. Here, c4 denotes a chunk size of $4$.
}
\label{tab:imagenet256_main}
\vspace{-0.8em}
\end{table*}

\subsection{Better Quality and Faster Convergence}
\label{sec:exp_quality_speed}

We first show in Tab.~\ref{tab:imagenet256_main} that \textsc{DAR} improves both final quality and convergence speed. Compared to the vanilla SiT-XL/2 baseline trained for $1.75\text{M}$ iterations, the static variant of \textsc{DAR} achieves a substantially better FID of $6.92$ (SDE) without CFG, while training for only $600\text{K}$ iterations. The dynamic variant attains the best ODE FID with CFG ($2.05$), again outperforming the SiT baseline with far fewer training iterations. To rule out the possibility that these gains arise simply from increased model size, we further train SiT-Plus, a widened SiT-XL/2 matched to the $752$M parameter count of dynamic c4. Despite using $2\times$ the training budget, SiT-Plus still trails \textsc{DAR} by a wide margin, confirming that the gains of \textsc{DAR} cannot be explained simply by parameter scaling.

A natural follow-up question is whether the gains of \textsc{DAR} merely reproduce what can already be achieved by equipping DiTs with U-Net-like skip pathways. We therefore compare against two representative models from this family in Tab.~\ref{tab:imagenet256_main}: U-ViT-H/2~\citep{bao2023all} and U-DiT-L~\citep{tian2024u}. Under SDE with CFG, \textsc{DAR} static c4 outperforms U-DiT-L by $0.77$ FID while using only $0.83\times$ as many parameters. Under ODE, \textsc{DAR} dynamic c4 further improves over U-ViT-H/2 by $0.24$ FID. The gap is informative: hand-designed skip topologies wire shallow and deep layers together at predetermined depths and with fixed, time-invariant fusion weights, whereas \textsc{DAR} replaces these manual choices with learned and timestep-adaptive aggregation. Most importantly, \textsc{DAR} preserves the isotropic, homogeneous Transformer stack that underlies modern scaling.

\subsection{Timestep Awareness Is Crucial for Routing in DiTs}
\label{sec:time}

\begin{table*}[t]
\centering
\setlength{\tabcolsep}{5pt}
\renewcommand{\arraystretch}{1.12}
\small

\begin{minipage}[t]{0.48\textwidth}
\centering
\begin{tabular}{lccc}
\toprule
\textbf{Method} & \textbf{100K} & \textbf{200K} & \textbf{400K} \\
\midrule
Static w/o $t$-injection & 22.36 & 15.47 & 11.51 \\
Dynamic            & 13.95 & 9.29 & 8.10 \\
Static w/ $t$-injection  & 17.39 & 10.12 & 7.97 \\

\bottomrule
\end{tabular}
\caption{Ablation on timestep awareness in \textsc{DAR}. We report FID$\downarrow$ at different training iterations.}
\label{tab:timestep_awareness}
\end{minipage}
\hfill
\begin{minipage}[t]{0.48\textwidth}
\centering
\begin{tabular}{lccc}
\toprule
\textbf{Method} & \textbf{100K} & \textbf{200K} & \textbf{300K} \\
\midrule
SiT + REPA  & 9.89 & 6.89 & 6.29 \\
DAR + REPA  & 7.09 & 5.92 & 5.68 \\
\bottomrule
\end{tabular}
\caption{Compatibility with REPA. We report FID$\downarrow$ at different training iterations.}
\label{tab:repa_orthogonal}
\end{minipage}
\vspace{-1.5em}
\end{table*}

\begin{wrapfigure}{l}{0.45\textwidth}
    \vspace{-0.6em}
    \centering
    \includegraphics[width=0.44\textwidth]{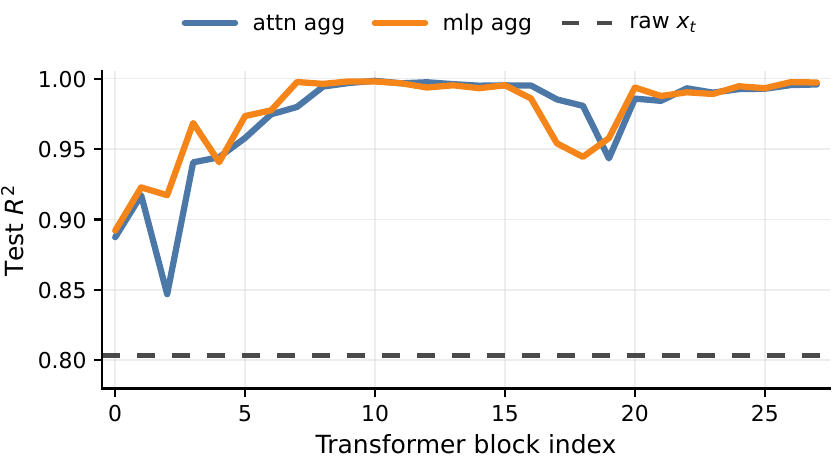}
    \caption{Linear-probe test $R^2$ for decoding the scalar denoising timestep $t$ from the aggregated hidden states that feed the router at each depth in DAR-Dynamic.}
    \label{fig:fig_t}
    \vspace{-0.8em}
    \end{wrapfigure}

Cross-layer information routing in DiTs differs fundamentally from its LLM counterpart in one key respect: the denoising timestep $t$ is a first-class control signal that modulates every layer's computation, and the very motivation for adaptive routing, that different noise levels demand different mixtures of shallow and deep features, presupposes that the router itself is aware of $t$. The three query parameterizations of \S\ref{sec:DAR} differ precisely in how this awareness is introduced. The pure static variant, $q_l = w_l$, leaves the router blind to the timestep. The dynamic variant, $\smash{q_l(t) = W_q^{(l)} v_{l-1}}$, derives its query from the most recent sublayer output, so any timestep information already encoded in $v_{l-1}$ is naturally inherited by the routing weights. We refer to this as implicit timestep injection. The explicit-injection variant, $q_l(t) = w_l + e(t)$, instead reuses DiT's existing timestep embedding to add a direct timestep signal to an otherwise time-invariant query, at no additional parameter cost. As shown in Tab.~\ref{tab:timestep_awareness}, both timestep-aware variants substantially outperform the timestep-blind baseline at matched compute. This supports the view that timestep awareness---regardless of how it is injected---is the key ingredient.

The effectiveness of the dynamic variant rests on a non-trivial premise: the input to the dynamic query, $v_{l-1}$, actually retains sufficient timestep information for $\smash{W_q^{(l)}}$ to extract. We test this premise directly with a linear-probe diagnostic. For a frozen DAR-Dynamic checkpoint, we sweep $t$ over a uniform grid while holding the underlying $(x_0, x_1)$ image-noise pairs fixed, collect the aggregated hidden state $h_l$ that feeds each sublayer's router, and fit a ridge regressor that maps the pooled feature back to the scalar $t$ on disjoint pair-level train/test splits. Fig.~\ref{fig:fig_t} reports the resulting test $R^2$ across all $28$ blocks. Both the attention and MLP aggregator inputs sit well above the raw input latents $x_t$ baseline ($R^2 \approx 0.80$) at every depth, exceed $0.95$ within the first five blocks, and remain close to $1.0$ throughout the deep stack. Thus the timestep is not merely present but linearly decodable from the very tensors the dynamic query consumes, confirming that $\smash{q_l(t) = W_q^{(l)} v_{l-1}}$ enjoys direct access to a strong $t$-signal without any explicit conditioning, and is consistent with the mechanism underlying the early-training advantage of the dynamic variant.

\subsection{DAR Is Orthogonal to REPA}
REPA~\citep{yu2025repa} accelerates DiT training by aligning intermediate hidden states with a pretrained visual encoder, an objective-level intervention that leaves the residual pathway untouched. \textsc{DAR}, in contrast, restructures how those hidden states are aggregated across depth, a purely architectural change that is agnostic to the training loss. The two interventions therefore act along orthogonal axes, and a natural question is whether their gains compound or merely overlap. Tab.~\ref{tab:repa_orthogonal} answers this directly: stacking \textsc{DAR} on top of REPA improves FID from $9.89$ to $7.09$ at $100\text{K}$ iterations and from $6.89$ to $5.92$ at $200\text{K}$ iterations. Notably, \textsc{DAR}+REPA at $100\text{K}$ already matches REPA alone at $200\text{K}$, a roughly $2\times$ early-stage speedup, while \textsc{DAR}+REPA at $200\text{K}$ surpasses even the $300\text{K}$ FID of REPA alone, indicating that the routing-level and representation-level accelerations compound rather than offset each other.

\subsection{Chunk Size Choices}
\label{sec:chunk}

\begin{wraptable}{r}{0.40\textwidth}
\vspace{-1.0em}
\centering
\setlength{\tabcolsep}{6pt}
\renewcommand{\arraystretch}{1.12}
\small
\begin{tabular}{lccc}
\toprule
Chunk size & 1 & 4 & 8 \\
\midrule
FID$\downarrow$ & 10.41 & 8.39  & 11.14 \\
IS$\uparrow$    & 107.2 & 121.7 & 103.51 \\
\bottomrule
\end{tabular}
\caption{Effects of chunk size $S$ in chunked aggregation on SiT-XL/2, ImageNet $256{\times}256$, no classifier-free guidance.}
\label{tab:chunk_size}
\vspace{-0.8em}
\end{wraptable}

The chunked aggregation in \S\ref{sec:DAR} exposes a single knob, the chunk size $S$, that interpolates between two extremes: $S=1$ degenerates to the dense, all-source variant where every sublayer output enters the routing softmax, while large $S$ collapses each chunk to a single summary $c_n$ and aggressively compresses the source set. Tab.~\ref{tab:chunk_size} reports a sweep of $S \in \{1, 4, 8\}$ on \textsc{DAR} under matched compute ($300\text{K}$ iterations) and reveals a clear U-shaped pattern with $S = 4$ at the bottom. We now show that this U-shaped pattern is not coincidental: the cost can be decomposed into two competing terms whose unique minimum lies precisely in the $S \approx 4$ regime predicted for \textsc{DAR}.

Under a mild rate-distortion model, the per-aggregator cost of chunked aggregation decomposes additively as
\begin{equation}
\mathcal{L}(S) \;=\; \log\bigl(L/S + S\bigr) \;+\; \alpha \, \log S 
\quad\text{with}\quad \alpha \in (0, 1) \ ,
\label{eq:chunk_cost}
\end{equation}
where $L$ is the total number of sublayers and $S \in (0, L]$ denotes the chunk size. The first term is the maximum-entropy lower bound on a softmax over $|\mathcal{S}|$ slots and caps the routing precision attainable by a bounded-norm query; the second is the rate-distortion cost of compressing $S$ sublayer outputs into a single $d$-dimensional summary, with $\alpha$ discounting the loss to reflect partial recoverability through vertical attention over earlier summaries.

\begin{proposition}[U-shaped cost of chunked aggregation] \label{prop:u_shape}
Let $L>0$ and $\alpha\in(0,1)$. Then $\mathcal{L}(S)$ in
Eq.~\eqref{eq:chunk_cost} is strictly decreasing on $(0,S^\star)$ and strictly
increasing on $(S^\star,\infty)$, where
\begin{equation}
S^\star=\sqrt{L \cdot \frac{1-\alpha}{1+\alpha}}.
\label{eq:S_star}
\end{equation}
Consequently, $\mathcal{L}(S)$ is U-shaped and has a unique global minimizer at
$S^\star$.
\end{proposition}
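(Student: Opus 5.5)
We argue by elementary calculus on $(0,\infty)$, on which $\mathcal{L}$ is smooth because $L/S+S>0$ and $S>0$ there. The plan is to compute $\mathcal{L}'(S)$ in closed form, reduce its sign to the sign of a simple quadratic in $S$, and read the monotonicity off that quadratic.

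\textbf{Step 1: differentiate.} Differentiating Eq.~\eqref{eq:chunk_cost} gives
\begin{equation*}
\mathcal{L}'(S)=\frac{1-L/S^2}{L/S+S}+\frac{\alpha}{S}
=\frac{S^2-L}{S\,(L+S^2)}+\frac{\alpha}{S}.
\end{equation*}
The first equality is the chain rule. For the second, multiply the numerator and denominator of the first fraction by $S$.

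\textbf{Step 2: combine over a common denominator.} Bringing both terms over $S(L+S^2)$ gives
\begin{equation*}
\mathcal{L}'(S)=\frac{S^2-L+\alpha(L+S^2)}{S\,(L+S^2)}=\frac{(1+\alpha)S^2-(1-\alpha)L}{S\,(L+S^2)}.
\end{equation*}
The denominator is strictly positive for $S>0$. Hence the sign of $\mathcal{L}'$ equals the sign of $g(S):=(1+\alpha)S^2-(1-\alpha)L$.

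\textbf{Step 3: analyse $g$.} Since $\alpha\in(0,1)$ and $L>0$, we have $1+\alpha>0$ and $(1-\alpha)L>0$. On $(0,\infty)$, $g$ is therefore a strictly increasing function of $S$, with $g(0^+)<0$ and $g\to\infty$. Its unique positive root is $S^\star=\sqrt{L(1-\alpha)/(1+\alpha)}$, as claimed in Eq.~\eqref{eq:S_star}. It follows that $\mathcal{L}'<0$ on $(0,S^\star)$ and $\mathcal{L}'>0$ on $(S^\star,\infty)$. By the mean value theorem, $\mathcal{L}$ is strictly decreasing on $(0,S^\star]$ and strictly increasing on $[S^\star,\infty)$.

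\textbf{Step 4: conclude.} The two monotonicity statements together give $\mathcal{L}(S)>\mathcal{L}(S^\star)$ for every $S\neq S^\star$. So $S^\star$ is the unique global minimizer on $(0,\infty)$, and $\mathcal{L}$ is U-shaped there.

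\textbf{Caveat on the domain.} The statement is phrased on $(0,\infty)$, whereas the model restricts to $S\in(0,L]$. Since $(1-\alpha)/(1+\alpha)<1$, we have $S^\star<\sqrt{L}\le L$ whenever $L\ge1$. In that case the minimizer lies in the admissible range and the same conclusion holds on $(0,L]$.

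There is no real obstacle here. The only step needing care is the algebra in Steps 1--2, where the $1-L/S^2$ factor must be combined with $\alpha/S$ without a sign slip.
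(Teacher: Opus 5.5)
Your proof is correct and is essentially the paper's argument: differentiate, write $\mathcal{L}'(S)$ over the positive denominator $S(S^2+L)$, and read the sign of $\mathcal{L}'$ off $(1+\alpha)S^2-(1-\alpha)L$ (your remark that $S^\star\le L$ when $L\ge 1$ is a sensible addition the paper omits). One small slip in your commentary: to reach the displayed form in Step~1 you multiply numerator and denominator by $S^2$, not $S$, although the expression you obtain is right.
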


A detailed proof and further analysis are provided in Appendix~\ref{app:chunk_proof}.

\subsection{Large-Scale T2I Model Post-Training}

Beyond ImageNet, we apply Distribution Matching Distillation~\citep{yin2024one,yin2024improved} to Qwen-Image~\citep{wu2025qwen} equipped with \textsc{DAR}. We find that adaptive cross-layer information routing helps the distilled model preserve high-frequency details, including sharp edges and fine textures, which are easily attenuated during aggressive few-step distillation. Full setup and samples are deferred to Appendix~\ref{t2i}.

\section{Conclusion}  \label{sec:conclusions}
In this paper, we conducted a systematic investigation of cross-layer information routing in DiTs, jointly along depth and denoising timestep, identified three symptoms of the pre-normalized residual stream inherited from the original Transformer---forward magnitude inflation, backward gradient decay, and block-wise redundancy---and accordingly proposed \textsc{DAR}, a drop-in residual replacement that performs learnable, timestep-adaptive, and non-incremental aggregation. On ImageNet $256{\times}256$, \textsc{DAR} attains a best FID of $6.92$ on SiT-XL/2, matches the baseline with $8.75\times$ fewer iterations, and further delivers a $2\times$ early-stage speedup over REPA. These results identify cross-layer routing as an underexplored design axis that complements prevailing advances in diffusion modeling. Further discussion of limitations and future work is provided in Appendix~\ref{limitations}.



\bibliographystyle{plainnat}
\bibliography{references}

@inproceedings{peebles2023scalable,
  title={Scalable diffusion models with transformers},
  author={Peebles, William and Xie, Saining},
  booktitle={Proceedings of the IEEE/CVF international conference on computer vision},
  pages={4195--4205},
  year={2023}
}

@inproceedings{bao2023all,
  title={All are worth words: A vit backbone for diffusion models},
  author={Bao, Fan and Nie, Shen and Xue, Kaiwen and Cao, Yue and Li, Chongxuan and Su, Hang and Zhu, Jun},
  booktitle={Proceedings of the IEEE/CVF conference on computer vision and pattern recognition},
  pages={22669--22679},
  year={2023}
}

@inproceedings{esser2024scaling,
  title={Scaling rectified flow transformers for high-resolution image synthesis},
  author={Esser, Patrick and Kulal, Sumith and Blattmann, Andreas and Entezari, Rahim and M{\"u}ller, Jonas and Saini, Harry and Levi, Yam and Lorenz, Dominik and Sauer, Axel and Boesel, Frederic and others},
  booktitle={Forty-first international conference on machine learning},
  year={2024}
}

@inproceedings{ma2024sit,
  title={Sit: Exploring flow and diffusion-based generative models with scalable interpolant transformers},
  author={Ma, Nanye and Goldstein, Mark and Albergo, Michael S and Boffi, Nicholas M and Vanden-Eijnden, Eric and Xie, Saining},
  booktitle={European Conference on Computer Vision},
  pages={23--40},
  year={2024},
  organization={Springer}
}

@article{chen2023pixart,
  title={{PixArt-$\alpha$}: Fast training of diffusion transformer for photorealistic text-to-image synthesis},
  author={Chen, Junsong and Yu, Jincheng and Ge, Chongjian and Yao, Lewei and Xie, Enze and Wu, Yue and Wang, Zhongdao and Kwok, James and Luo, Ping and Lu, Huchuan and others},
  journal={arXiv preprint arXiv:2310.00426},
  year={2023}
}

@inproceedings{chen2024pixart,
  title={Pixart-$\sigma$: Weak-to-strong training of diffusion transformer for 4k text-to-image generation},
  author={Chen, Junsong and Ge, Chongjian and Xie, Enze and Wu, Yue and Yao, Lewei and Ren, Xiaozhe and Wang, Zhongdao and Luo, Ping and Lu, Huchuan and Li, Zhenguo},
  booktitle={European Conference on Computer Vision},
  pages={74--91},
  year={2024},
  organization={Springer}
}

@article{chen2024pixartdelta,
  title={Pixart-$\{$$\backslash$delta$\}$: Fast and controllable image generation with latent consistency models},
  author={Chen, Junsong and Wu, Yue and Luo, Simian and Xie, Enze and Paul, Sayak and Luo, Ping and Zhao, Hang and Li, Zhenguo},
  journal={arXiv preprint arXiv:2401.05252},
  year={2024}
}

@article{wu2025qwen,
  title={Qwen-image technical report},
  author={Wu, Chenfei and Li, Jiahao and Zhou, Jingren and Lin, Junyang and Gao, Kaiyuan and Yan, Kun and Yin, Sheng-ming and Bai, Shuai and Xu, Xiao and Chen, Yilei and others},
  journal={arXiv preprint arXiv:2508.02324},
  year={2025}
}

@article{gao2024lumina,
  title={Lumina-t2x: Transforming text into any modality, resolution, and duration via flow-based large diffusion transformers},
  author={Gao, Peng and Zhuo, Le and Liu, Dongyang and Du, Ruoyi and Luo, Xu and Qiu, Longtian and Zhang, Yuhang and Lin, Chen and Huang, Rongjie and Geng, Shijie and others},
  journal={arXiv preprint arXiv:2405.05945},
  year={2024}
}

@article{Liu2024PlaygroundVI,
  title={Playground v3: Improving Text-to-Image Alignment with Deep-Fusion Large Language Models},
  author={Bingchen Liu and Ehsan Akhgari and Alexander Visheratin and Aleks Kamko and Linmiao Xu and Shivam Shrirao and Jo{\~a}o Pedro Gandarela de Souza and Suhail Doshi and Daiqing Li},
  journal={ArXiv},
  year={2024},
  volume={abs/2409.10695},
  url={https://api.semanticscholar.org/CorpusID:272694430}
}

@article{xie2024sana,
  title={Sana: Efficient high-resolution image synthesis with linear diffusion transformers},
  author={Xie, Enze and Chen, Junsong and Chen, Junyu and Cai, Han and Tang, Haotian and Lin, Yujun and Zhang, Zhekai and Li, Muyang and Zhu, Ligeng and Lu, Yao and others},
  journal={arXiv preprint arXiv:2410.10629},
  year={2024}
}

@inproceedings{yu2025repa,
  title={Representation Alignment for Generation: Training Diffusion Transformers Is Easier Than You Think},
  author={Sihyun Yu and Sangkyung Kwak and Huiwon Jang and Jongheon Jeong and Jonathan Huang and Jinwoo Shin and Saining Xie},
  year={2025},
  booktitle={International Conference on Learning Representations},
}

@article{leng2025repae,
  title={REPA-E: Unlocking VAE for End-to-End Tuning with Latent Diffusion Transformers},
  author={Xingjian Leng and Jaskirat Singh and Yunzhong Hou and Zhenchang Xing and Saining Xie and Liang Zheng},
  year={2025},
  journal={arXiv preprint arXiv:2504.10483},
}

@inproceedings{he2016deep,
  title={Deep residual learning for image recognition},
  author={He, Kaiming and Zhang, Xiangyu and Ren, Shaoqing and Sun, Jian},
  booktitle={Proceedings of the IEEE conference on computer vision and pattern recognition},
  pages={770--778},
  year={2016}
}

@article{xie2025mhc,
  title={mhc: Manifold-constrained hyper-connections},
  author={Xie, Zhenda and Wei, Yixuan and Cao, Huanqi and Zhao, Chenggang and Deng, Chengqi and Li, Jiashi and Dai, Damai and Gao, Huazuo and Chang, Jiang and Yu, Kuai and others},
  journal={arXiv preprint arXiv:2512.24880},
  year={2025}
}

@article{team2026attention,
  title={Attention residuals},
  author={Team, Kimi and Chen, Guangyu and Zhang, Yu and Su, Jianlin and Xu, Weixin and Pan, Siyuan and Wang, Yaoyu and Wang, Yucheng and Chen, Guanduo and Yin, Bohong and others},
  journal={arXiv preprint arXiv:2603.15031},
  year={2026}
}

@article{srivastava2015highway,
  title={Highway networks},
  author={Srivastava, Rupesh Kumar and Greff, Klaus and Schmidhuber, J{\"u}rgen},
  journal={arXiv preprint arXiv:1505.00387},
  year={2015}
}

@article{zhu2024hyper,
  title={Hyper-connections},
  author={Zhu, Defa and Huang, Hongzhi and Huang, Zihao and Zeng, Yutao and Mao, Yunyao and Wu, Banggu and Min, Qiyang and Zhou, Xun},
  journal={arXiv preprint arXiv:2409.19606},
  year={2024}
}

@article{pagliardini2024denseformer,
  title={Denseformer: Enhancing information flow in transformers via depth weighted averaging},
  author={Pagliardini, Matteo and Mohtashami, Amirkeivan and Fleuret, Francois and Jaggi, Martin},
  journal={Advances in neural information processing systems},
  volume={37},
  pages={136479--136508},
  year={2024}
}

@inproceedings{bachlechner2021rezero,
  title={Rezero is all you need: Fast convergence at large depth},
  author={Bachlechner, Thomas and Majumder, Bodhisattwa Prasad and Mao, Henry and Cottrell, Gary and McAuley, Julian},
  booktitle={Uncertainty in artificial intelligence},
  pages={1352--1361},
  year={2021},
  organization={PMLR}
}

@inproceedings{touvron2021going,
  title={Going deeper with image transformers},
  author={Touvron, Hugo and Cord, Matthieu and Sablayrolles, Alexandre and Synnaeve, Gabriel and J{\'e}gou, Herv{\'e}},
  booktitle={Proceedings of the IEEE/CVF international conference on computer vision},
  pages={32--42},
  year={2021}
}

@article{wang2024deepnet,
  title={Deepnet: Scaling transformers to 1,000 layers},
  author={Wang, Hongyu and Ma, Shuming and Dong, Li and Huang, Shaohan and Zhang, Dongdong and Wei, Furu},
  journal={IEEE Transactions on Pattern Analysis and Machine Intelligence},
  volume={46},
  number={10},
  pages={6761--6774},
  year={2024},
  publisher={IEEE}
}

@inproceedings{huang2017densely,
  title={Densely connected convolutional networks},
  author={Huang, Gao and Liu, Zhuang and Van Der Maaten, Laurens and Weinberger, Kilian Q},
  booktitle={Proceedings of the IEEE conference on computer vision and pattern recognition},
  pages={4700--4708},
  year={2017}
}

@article{ho2020denoising,
  title={Denoising diffusion probabilistic models},
  author={Ho, Jonathan and Jain, Ajay and Abbeel, Pieter},
  journal={Advances in neural information processing systems},
  volume={33},
  pages={6840--6851},
  year={2020}
}

@article{lipman2022flow,
  title={Flow matching for generative modeling},
  author={Lipman, Yaron and Chen, Ricky TQ and Ben-Hamu, Heli and Nickel, Maximilian and Le, Matt},
  journal={arXiv preprint arXiv:2210.02747},
  year={2022}
}

@article{song2020score,
  title={Score-based generative modeling through stochastic differential equations},
  author={Song, Yang and Sohl-Dickstein, Jascha and Kingma, Diederik P and Kumar, Abhishek and Ermon, Stefano and Poole, Ben},
  journal={arXiv preprint arXiv:2011.13456},
  year={2020}
}

@article{liu2022flow,
  title={Flow straight and fast: Learning to generate and transfer data with rectified flow},
  author={Liu, Xingchao and Gong, Chengyue and Liu, Qiang},
  journal={arXiv preprint arXiv:2209.03003},
  year={2022}
}

@article{li2026siamesenorm,
  title={SiameseNorm: Breaking the Barrier to Reconciling Pre/Post-Norm},
  author={Li, Tianyu and Han, Dongchen and Cao, Zixuan and Huang, Haofeng and Zhou, Mengyu and Chen, Ming and Zhao, Erchao and Jiang, Xiaoxi and Jiang, Guanjun and Huang, Gao},
  journal={arXiv preprint arXiv:2602.08064},
  year={2026}
}

@inproceedings{chen2026sortblock,
  title={Sortblock: Similarity-aware feature reuse for diffusion model},
  author={Chen, Hanqi and Zhang, Xu and Guan, Xiaoliu and Jiang, Lielin and Wang, Guanzhong and Chen, Zeyu and Liu, Yi},
  booktitle={Proceedings of the AAAI Conference on Artificial Intelligence},
  volume={40},
  number={4},
  pages={2859--2867},
  year={2026}
}

@inproceedings{men2025shortgpt,
  title={Shortgpt: Layers in large language models are more redundant than you expect},
  author={Men, Xin and Xu, Mingyu and Zhang, Qingyu and Yuan, Qianhao and Wang, Bingning and Lin, Hongyu and Lu, Yaojie and Han, Xianpei and Chen, Weipeng},
  booktitle={Findings of the Association for Computational Linguistics: ACL 2025},
  pages={20192--20204},
  year={2025}
}

@inproceedings{song2024sleb,
  title={SLEB: Streamlining LLMs through Redundancy Verification and Elimination of Transformer Blocks},
  author={Song, Jiwon and Oh, Kyungseok and Kim, Taesu and Kim, Hyungjun and Kim, Yulhwa and Kim, Jae-Joon},
  booktitle={International Conference on Machine Learning},
  pages={46136--46155},
  year={2024},
  organization={PMLR}
}

@article{jiang2024tracing,
  title={Tracing representation progression: Analyzing and enhancing layer-wise similarity},
  author={Jiang, Jiachen and Zhou, Jinxin and Zhu, Zhihui},
  journal={arXiv preprint arXiv:2406.14479},
  year={2024}
}

@inproceedings{nguyen2019transformers,
  title={Transformers without tears: Improving the normalization of self-attention},
  author={Nguyen, Toan Q and Salazar, Julian},
  booktitle={Proceedings of the 16th international conference on spoken language translation},
  year={2019}
}

@inproceedings{xiong2020layer,
  title={On layer normalization in the transformer architecture},
  author={Xiong, Ruibin and Yang, Yunchang and He, Di and Zheng, Kai and Zheng, Shuxin and Xing, Chen and Zhang, Huishuai and Lan, Yanyan and Wang, Liwei and Liu, Tieyan},
  booktitle={International conference on machine learning},
  pages={10524--10533},
  year={2020},
  organization={PMLR}
}

@article{kong2024hunyuanvideo,
  title={Hunyuanvideo: A systematic framework for large video generative models},
  author={Kong, Weijie and Tian, Qi and Zhang, Zijian and Min, Rox and Dai, Zuozhuo and Zhou, Jin and Xiong, Jiangfeng and Li, Xin and Wu, Bo and Zhang, Jianwei and others},
  journal={arXiv preprint arXiv:2412.03603},
  year={2024}
}

@misc{flux2024,
    author={Black Forest Labs},
    title={FLUX},
    year={2024},
    howpublished={\url{https://github.com/black-forest-labs/flux}},
}

@article{hacohen2026ltx,
  title={LTX-2: Efficient Joint Audio-Visual Foundation Model},
  author={HaCohen, Yoav and Brazowski, Benny and Chiprut, Nisan and Bitterman, Yaki and Kvochko, Andrew and Berkowitz, Avishai and Shalem, Daniel and Lifschitz, Daphna and Moshe, Dudu and Porat, Eitan and others},
  journal={arXiv preprint arXiv:2601.03233},
  year={2026}
}

@article{cai2025z,
  title={Z-image: An efficient image generation foundation model with single-stream diffusion transformer},
  author={Cai, Huanqia and Cao, Sihan and Du, Ruoyi and Gao, Peng and Hoi, Steven and Hou, Zhaohui and Huang, Shijie and Jiang, Dengyang and Jin, Xin and Li, Liangchen and others},
  journal={arXiv preprint arXiv:2511.22699},
  year={2025}
}

@article{seedream2025seedream,
  title={Seedream 4.0: Toward next-generation multimodal image generation},
  author={Seedream, Team and Chen, Yunpeng and Gao, Yu and Gong, Lixue and Guo, Meng and Guo, Qiushan and Guo, Zhiyao and Hou, Xiaoxia and Huang, Weilin and Huang, Yixuan and others},
  journal={arXiv preprint arXiv:2509.20427},
  year={2025}
}

@misc{li2024hunyuandit,
      title={Hunyuan-DiT: A Powerful Multi-Resolution Diffusion Transformer with Fine-Grained Chinese Understanding}, 
      author={Zhimin Li and Jianwei Zhang and Qin Lin and Jiangfeng Xiong and Yanxin Long and Xinchi Deng and Yingfang Zhang and Xingchao Liu and Minbin Huang and Zedong Xiao and Dayou Chen and Jiajun He and Jiahao Li and Wenyue Li and Chen Zhang and Rongwei Quan and Jianxiang Lu and Jiabin Huang and Xiaoyan Yuan and Xiaoxiao Zheng and Yixuan Li and Jihong Zhang and Chao Zhang and Meng Chen and Jie Liu and Zheng Fang and Weiyan Wang and Jinbao Xue and Yangyu Tao and Jianchen Zhu and Kai Liu and Sihuan Lin and Yifu Sun and Yun Li and Dongdong Wang and Mingtao Chen and Zhichao Hu and Xiao Xiao and Yan Chen and Yuhong Liu and Wei Liu and Di Wang and Yong Yang and Jie Jiang and Qinglin Lu},
      year={2024},
      eprint={2405.08748},
      archivePrefix={arXiv},
      primaryClass={cs.CV}
}

@article{zheng2025diffusion,
  title={Diffusion transformers with representation autoencoders},
  author={Zheng, Boyang and Ma, Nanye and Tong, Shengbang and Xie, Saining},
  journal={arXiv preprint arXiv:2510.11690},
  year={2025}
}

@inproceedings{tan2025ominicontrol,
  title={Ominicontrol: Minimal and universal control for diffusion transformer},
  author={Tan, Zhenxiong and Liu, Songhua and Yang, Xingyi and Xue, Qiaochu and Wang, Xinchao},
  booktitle={Proceedings of the IEEE/CVF International Conference on Computer Vision},
  pages={14940--14950},
  year={2025}
}

@inproceedings{zhang2025easycontrol,
  title={Easycontrol: Adding efficient and flexible control for diffusion transformer},
  author={Zhang, Yuxuan and Yuan, Yirui and Song, Yiren and Wang, Haofan and Liu, Jiaming},
  booktitle={Proceedings of the IEEE/CVF International Conference on Computer Vision},
  pages={19513--19524},
  year={2025}
}

@article{deng2024causal,
  title={Causal diffusion transformers for generative modeling},
  author={Deng, Chaorui and Zhu, Deyao and Li, Kunchang and Guang, Shi and Fan, Haoqi},
  journal={arXiv preprint arXiv:2412.12095},
  year={2024}
}

@inproceedings{yao2025reconstruction,
  title={Reconstruction vs. generation: Taming optimization dilemma in latent diffusion models},
  author={Yao, Jingfeng and Yang, Bin and Wang, Xinggang},
  booktitle={Proceedings of the Computer Vision and Pattern Recognition Conference},
  pages={15703--15712},
  year={2025}
}

@article{chen2024deep,
  title={Deep compression autoencoder for efficient high-resolution diffusion models},
  author={Chen, Junyu and Cai, Han and Chen, Junsong and Xie, Enze and Yang, Shang and Tang, Haotian and Li, Muyang and Lu, Yao and Han, Song},
  journal={arXiv preprint arXiv:2410.10733},
  year={2024}
}

@article{cheng2025playing,
  title={Playing with transformer at 30+ fps via next-frame diffusion},
  author={Cheng, Xinle and He, Tianyu and Xu, Jiayi and Guo, Junliang and He, Di and Bian, Jiang},
  journal={arXiv preprint arXiv:2506.01380},
  year={2025}
}

@article{huang2025self,
  title={Self forcing: Bridging the train-test gap in autoregressive video diffusion},
  author={Huang, Xun and Li, Zhengqi and He, Guande and Zhou, Mingyuan and Shechtman, Eli},
  journal={arXiv preprint arXiv:2506.08009},
  year={2025}
}

@article{tian2024u,
  title={U-dits: Downsample tokens in u-shaped diffusion transformers},
  author={Tian, Yuchuan and Tu, Zhijun and Chen, Hanting and Hu, Jie and Xu, Chao and Wang, Yunhe},
  journal={Advances in Neural Information Processing Systems},
  volume={37},
  pages={51994--52013},
  year={2024}
}

@article{sclocchi2025phase,
  title={A phase transition in diffusion models reveals the hierarchical nature of data},
  author={Sclocchi, Antonio and Favero, Alessandro and Wyart, Matthieu},
  journal={Proceedings of the National Academy of Sciences},
  volume={122},
  number={1},
  pages={e2408799121},
  year={2025},
  publisher={National Academy of Sciences}
}

@inproceedings{chen2025towards,
  title={Towards stabilized and efficient diffusion transformers through long-skip-connections with spectral constraints},
  author={Chen, Guanjie and Zhao, Xinyu and Zhou, Yucheng and Qu, Xiaoye and Chen, Tianlong and Cheng, Yu},
  booktitle={Proceedings of the IEEE/CVF International Conference on Computer Vision},
  pages={17708--17718},
  year={2025}
}

@article{russakovsky2015imagenet,
  title={Imagenet large scale visual recognition challenge},
  author={Russakovsky, Olga and Deng, Jia and Su, Hao and Krause, Jonathan and Satheesh, Sanjeev and Ma, Sean and Huang, Zhiheng and Karpathy, Andrej and Khosla, Aditya and Bernstein, Michael and others},
  journal={International journal of computer vision},
  volume={115},
  number={3},
  pages={211--252},
  year={2015},
  publisher={Springer}
}

@article{salimans2016improved,
  title={Improved techniques for training gans},
  author={Salimans, Tim and Goodfellow, Ian and Zaremba, Wojciech and Cheung, Vicki and Radford, Alec and Chen, Xi},
  journal={Advances in neural information processing systems},
  volume={29},
  year={2016}
}

@article{heusel2017gans,
  title={Gans trained by a two time-scale update rule converge to a local nash equilibrium},
  author={Heusel, Martin and Ramsauer, Hubert and Unterthiner, Thomas and Nessler, Bernhard and Hochreiter, Sepp},
  journal={Advances in neural information processing systems},
  volume={30},
  year={2017}
}

@inproceedings{nash2021generating,
  title={Generating images with sparse representations},
  author={Nash, Charlie and Menick, Jacob and Dieleman, Sander and Battaglia, Peter},
  booktitle={International Conference on Machine Learning},
  pages={7958--7968},
  year={2021},
  organization={PMLR}
}

@article{kynkaanniemi2019improved,
  title={Improved precision and recall metric for assessing generative models},
  author={Kynk{\"a}{\"a}nniemi, Tuomas and Karras, Tero and Laine, Samuli and Lehtinen, Jaakko and Aila, Timo},
  journal={Advances in neural information processing systems},
  volume={32},
  year={2019}
}

@article{ho2022classifier,
  title={Classifier-free diffusion guidance},
  author={Ho, Jonathan and Salimans, Tim},
  journal={arXiv preprint arXiv:2207.12598},
  year={2022}
}

@article{yin2024improved,
  title={Improved distribution matching distillation for fast image synthesis},
  author={Yin, Tianwei and Gharbi, Micha{\"e}l and Park, Taesung and Zhang, Richard and Shechtman, Eli and Durand, Fredo and Freeman, William T},
  journal={Advances in neural information processing systems},
  volume={37},
  pages={47455--47487},
  year={2024}
}

@inproceedings{yin2024one,
  title={One-step diffusion with distribution matching distillation},
  author={Yin, Tianwei and Gharbi, Micha{\"e}l and Zhang, Richard and Shechtman, Eli and Durand, Fredo and Freeman, William T and Park, Taesung},
  booktitle={Proceedings of the IEEE/CVF conference on computer vision and pattern recognition},
  pages={6613--6623},
  year={2024}
}

@inproceedings{sohl2015deep,
  title={Deep unsupervised learning using nonequilibrium thermodynamics},
  author={Sohl-Dickstein, Jascha and Weiss, Eric and Maheswaranathan, Niru and Ganguli, Surya},
  booktitle={International conference on machine learning},
  pages={2256--2265},
  year={2015},
  organization={pmlr}
}

@inproceedings{nichol2021improved,
  title={Improved denoising diffusion probabilistic models},
  author={Nichol, Alexander Quinn and Dhariwal, Prafulla},
  booktitle={International conference on machine learning},
  pages={8162--8171},
  year={2021},
  organization={PMLR}
}

@article{kingma2021variational,
  title={Variational diffusion models},
  author={Kingma, Diederik and Salimans, Tim and Poole, Ben and Ho, Jonathan},
  journal={Advances in neural information processing systems},
  volume={34},
  pages={21696--21707},
  year={2021}
}

@article{karras2022elucidating,
  title={Elucidating the design space of diffusion-based generative models},
  author={Karras, Tero and Aittala, Miika and Aila, Timo and Laine, Samuli},
  journal={Advances in neural information processing systems},
  volume={35},
  pages={26565--26577},
  year={2022}
}

@article{song2020denoising,
  title={Denoising diffusion implicit models},
  author={Song, Jiaming and Meng, Chenlin and Ermon, Stefano},
  journal={arXiv preprint arXiv:2010.02502},
  year={2020}
}

@inproceedings{rombach2022high,
  title={High-resolution image synthesis with latent diffusion models},
  author={Rombach, Robin and Blattmann, Andreas and Lorenz, Dominik and Esser, Patrick and Ommer, Bj{\"o}rn},
  booktitle={Proceedings of the IEEE/CVF conference on computer vision and pattern recognition},
  pages={10684--10695},
  year={2022}
}

@inproceedings{ci2025describe,
  title={Describe, Don't Dictate: Semantic Image Editing with Natural Language Intent},
  author={Ci, En and Guan, Shanyan and Ge, Yanhao and Zhang, Yilin and Li, Wei and Zhang, Zhenyu and Yang, Jian and Tai, Ying},
  booktitle={Proceedings of the IEEE/CVF International Conference on Computer Vision},
  pages={19185--19194},
  year={2025}
}

@article{wan2025wan,
  title={Wan: Open and advanced large-scale video generative models},
  author={Wan, Team and Wang, Ang and Ai, Baole and Wen, Bin and Mao, Chaojie and Xie, Chen-Wei and Chen, Di and Yu, Feiwu and Zhao, Haiming and Yang, Jianxiao and others},
  journal={arXiv preprint arXiv:2503.20314},
  year={2025}
}

@article{labs2025flux,
  title={FLUX. 1 Kontext: Flow Matching for In-Context Image Generation and Editing in Latent Space},
  author={Labs, Black Forest and Batifol, Stephen and Blattmann, Andreas and Boesel, Frederic and Consul, Saksham and Diagne, Cyril and Dockhorn, Tim and English, Jack and English, Zion and Esser, Patrick and others},
  journal={arXiv preprint arXiv:2506.15742},
  year={2025}
}

@article{yang2024cogvideox,
  title={Cogvideox: Text-to-video diffusion models with an expert transformer},
  author={Yang, Zhuoyi and Teng, Jiayan and Zheng, Wendi and Ding, Ming and Huang, Shiyu and Xu, Jiazheng and Yang, Yuanming and Hong, Wenyi and Zhang, Xiaohan and Feng, Guanyu and others},
  journal={arXiv preprint arXiv:2408.06072},
  year={2024}
}

@article{zhang2025test,
  title={Test-time training done right},
  author={Zhang, Tianyuan and Bi, Sai and Hong, Yicong and Zhang, Kai and Luan, Fujun and Yang, Songlin and Sunkavalli, Kalyan and Freeman, William T and Tan, Hao},
  journal={arXiv preprint arXiv:2505.23884},
  year={2025}
}

\newpage
\appendix
\noindent In the Appendix, we provide supplementary materials for our work ``Rethinking Cross-Layer Information Routing in
Diffusion Transformers'', organized according to the corresponding sections in the main paper.

\section{More Discussion on Related Work}
\label{app:relatedworks}

\subsection{Diffusion Models}

Diffusion models were originally formulated as a finite-step Markov chain that progressively corrupts data with Gaussian noise and learns to reverse the process via a variational bound \citep{sohl2015deep,ho2020denoising}, and \citet{song2020score} unified this view with score matching by recasting the forward and reverse processes as a continuous-time SDE with an equivalent probability-flow ODE. Subsequent work refined the noise schedule and parameterization \citep{nichol2021improved,kingma2021variational,karras2022elucidating, song2020denoising}. To avoid pixel-space cost, latent diffusion performs denoising in the compressed latent space of a pretrained autoencoder \citep{rombach2022high}. A parallel line reformulated generation as learning a deterministic velocity field that transports noise to data, with Flow Matching \citep{lipman2022flow} regressing onto the conditional velocity along a prescribed probability path and Rectified Flow \citep{liu2022flow} favoring straight transport for few-step inference; SiT \citep{ma2024sit} casts diffusion- and flow-based objectives under a single interpolant framework, and Stable Diffusion 3 \citep{esser2024scaling} scales rectified-flow training to large text-to-image models. Diffusion models have since become a dominant generative paradigm, achieving remarkable success in image generation~\citep{wu2025qwen, xie2024sana, flux2024}, image editing~\citep{labs2025flux, ci2025describe, zhang2025test}, video generation~\citep{yang2024cogvideox, wan2025wan}, and related visual synthesis tasks.

\section{Proof of Proposition and Empirical Verification}
\label{app:chunk_proof}

For convenience, we restate the proposition.
\setcounter{proposition}{0}
\renewcommand{\theHproposition}{appendix.\arabic{proposition}}
\begin{proposition}[U-shaped cost of chunked aggregation]
Let $L>0$ and $\alpha\in(0,1)$. Then $\mathcal{L}(S)$ in
Eq.~\eqref{eq:chunk_cost} is strictly decreasing on $(0,S^\star)$ and strictly
increasing on $(S^\star,\infty)$, where
\begin{equation}
S^\star=\sqrt{L \cdot \frac{1-\alpha}{1+\alpha}}.
\end{equation}
Consequently, $\mathcal{L}(S)$ is U-shaped and has a unique global minimizer at
$S^\star$.
\end{proposition}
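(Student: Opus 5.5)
The plan is a direct first-derivative computation. On $(0,\infty)$ the argument $L/S+S$ is positive, so $\mathcal{L}$ is smooth there. Differentiating Eq.~\eqref{eq:chunk_cost} gives
\begin{equation*}
\mathcal{L}'(S) \;=\; \frac{1 - L/S^2}{L/S + S} + \frac{\alpha}{S}
\;=\; \frac{S^2 - L}{S(L + S^2)} + \frac{\alpha}{S}
\;=\; \frac{(1+\alpha)S^2 - (1-\alpha)L}{S\,(L+S^2)} .
\end{equation*}
The second equality multiplies the numerator and denominator of the first fraction by $S$. The third puts both terms over the common denominator $S(L+S^2)$, so the numerator becomes $S^2 - L + \alpha(L+S^2)$.

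The key observation is that the denominator $S(L+S^2)$ is strictly positive for all $S>0$, since $L>0$. Hence the sign of $\mathcal{L}'(S)$ equals the sign of the quadratic numerator
\begin{equation*}
N(S) = (1+\alpha)S^2 - (1-\alpha)L .
\end{equation*}
Because $\alpha\in(0,1)$, both $1+\alpha>0$ and $(1-\alpha)L>0$. So $N$ is strictly increasing on $(0,\infty)$ and has exactly one positive root, $S^\star=\sqrt{L(1-\alpha)/(1+\alpha)}$. It follows that $\mathcal{L}'<0$ on $(0,S^\star)$ and $\mathcal{L}'>0$ on $(S^\star,\infty)$.

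Strict monotonicity on each interval then follows from the mean value theorem; $\mathcal{L}'$ vanishes only at the single point $S^\star$, which does not affect strictness. Since $\mathcal{L}$ decreases strictly up to $S^\star$ and increases strictly after it, $S^\star$ is the unique global minimizer on $(0,\infty)$. This is the claimed U-shape. If one wants the conclusion on the modeling domain $(0,L]$, note that $S^\star<\sqrt{L}\le L$ whenever $L\ge 1$, so the minimizer lies inside that domain.

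There is no real obstacle here. The only step requiring care is the algebraic simplification of $\mathcal{L}'$ to a single fraction with a manifestly positive denominator, and using $\alpha<1$ to guarantee that the root $S^\star$ is positive and real.
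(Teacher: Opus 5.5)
Your proposal is correct and follows the paper's proof essentially step for step. You reduce $\mathcal{L}'(S)$ to $\frac{(1+\alpha)S^2-(1-\alpha)L}{S(S^2+L)}$, read its sign off the numerator because the denominator is positive, and identify the single positive root $S^\star$. Your mean-value-theorem argument for strictness and the remark that $S^\star<\sqrt{L}\le L$ when $L\ge 1$ are small refinements that the paper leaves implicit.
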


\begin{proof}
Differentiating Eq.~\eqref{eq:chunk_cost} yields
\[
\mathcal{L}'(S)
=
\frac{S^2 - L}{S(S^2 + L)}
+
\frac{\alpha}{S}
=
\frac{(1+\alpha)S^2-(1-\alpha)L}{S(S^2+L)} \ .
\]
Since $S(S^2+L)>0$ for all $S>0$, the sign of $\mathcal{L}'(S)$ is determined by
\[
(1+\alpha)S^2-(1-\alpha)L \ .
\]
Thus,
\[
\mathcal{L}'(S)<0 \quad \text{for} \quad S<S^\star \ ,
\qquad
\mathcal{L}'(S)=0 \quad \text{for} \quad S=S^\star \ ,
\qquad
\mathcal{L}'(S)>0 \quad \text{for} \quad S>S^\star \ ,
\]
where
\[
S^\star=\sqrt{L \cdot \frac{1-\alpha}{1+\alpha}} \ .
\]
Therefore, $\mathcal{L}(S)$ is strictly decreasing on $(0,S^\star)$ and strictly
increasing on $(S^\star,\infty)$, which proves that $S^\star$ is the unique
global minimizer.
\end{proof}

\paragraph{Empirical agreement.}
For SiT-XL/2 (depth $28$, two sublayers per chunk, hence $L = 56$), Eq.~\eqref{eq:S_star} predicts $S^\star \in [3.7, 4.9]$ over the realistic range $\alpha \in [0.4, 0.6]$, identifying $S = 4$ as the model-predicted optimum. This agrees quantitatively with Tab.~\ref{tab:chunk_size}: $S = 4$ improves on both the under-compressed extreme $S = 1$ (no chunking, $|\mathcal{S}| = L$, dominated by the routing-entropy term) and the over-compressed $S = 8$ (small $|\mathcal{S}|$ but large per-summary distortion). We therefore use $S = 4$ throughout the main experiments. Since Eq.~\eqref{eq:S_star} predicts $S^\star$ to scale as $\sqrt{L}$, we further conjecture that scaling DiTs to substantially deeper backbones will require a proportionally larger chunk size.

\section{Additional Implementation Details}
\label{impl}

\subsection{Experimental Configuration}

Unless otherwise specified, all ImageNet experiments follow the training recipe of SiT~\citep{ma2024sit}. We train models with a global batch size of $1024$, a learning rate of $1 \times 10^{-4}$, and bfloat16 mixed precision. All compared models are trained under the same optimization and data-processing settings unless explicitly stated otherwise. For experiments involving REPA~\citep{yu2025repa}, we adopt the original REPA configuration. Specifically, we use DINOv2-B as the pretrained visual encoder, set the representation-alignment coefficient to $0.5$, and apply the alignment loss to the hidden representation at the eighth layer. No additional modifications are introduced beyond replacing the residual routing mechanism with
the proposed \textsc{DAR}. To ensure a fair comparison, we rerun the SiT and REPA baselines under our experimental setting rather than directly copying numbers from prior work.

\subsection{Compute Resources}
Experiments were conducted on NVIDIA H20 GPUs with 192 CPU cores and 1024 GB of system memory. For the DAR-static-c4 configuration, each training step takes approximately 0.59 seconds.

\subsection{Additional Implementation Details for DAR}
\label{app:final_aggregator}

\paragraph{Dedicated final aggregator.} In our design, the final aggregator that produces the input to the prediction layer has access not only to the prior chunk summaries but also to all raw sublayer outputs of the last chunk        
\begin{equation}
\mathcal{S}_{\text{final}} \;=\; \underbrace{\{c_0, c_1, \ldots, c_{N-1}\}}_{\text{prior chunk summaries}} \;\cup\;               
\underbrace{\{v_{(N-1)S+1}, v_{(N-1)S+2}, \ldots, v_L\}}_{\text{raw last-chunk sublayer outputs}} \ .                                    
\end{equation}
This is in contrast to AttnRes, whose final layer aggregates strictly over the $N$ chunk summaries. The intuition is that the most recent sublayer outputs carry the most task-specific signal; thus, exposing them in raw form, rather than compressed into a single summary, lets the final layer recover fine-grained information that the chunk-level summary would otherwise discard. This yields about a 2-point FID gain after $200\text{K}$ training iterations.

\paragraph{REPA-specific implementation.} For experiments that combine \textsc{DAR} with REPA, we depart from the dedicated final aggregator described above and instead let the last chunk reuse its own MLP aggregator parameters --- the static/dynamic query ($w_l$ or $W_q^{(l)}$) and the per-source $\mathrm{RMSNorm}$ --- to perform the final aggregation. This is an empirically motivated design that improves performance.

\section{Details of Large-Scale T2I Model Post-Training}
\label{t2i}

For large-scale T2I post-training, we apply DMD to Qwen-Image~\citep{wu2025qwen} with \textsc{DAR} inserted into the MM-DiT backbone. We use LoRA fine-tuning with rank $64$, and train the student branch with a learning rate of $5 \times 10^{-6}$ together with a fake branch learning rate of $2 \times 10^{-6}$. Distillation is performed with $4$ denoising steps and guidance scale $4.0$. We train at $1024^2$ resolution using bfloat16 mixed precision and a per-GPU batch size of $1$. More visualizations are provided in the supplementary materials.

\section{Infrastructure}

\begin{figure*}[t]
\centering
\includegraphics[width=1\textwidth]{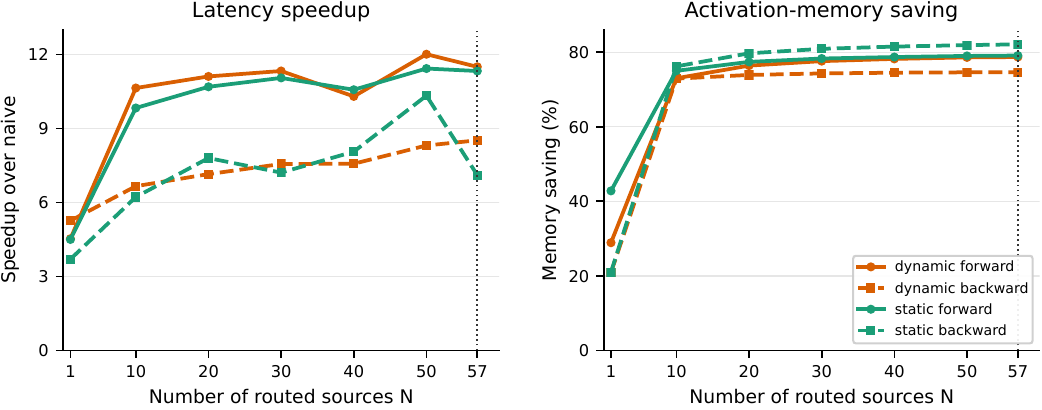}
\caption{Infrastructure benchmark of the fused Triton implementation for \textsc{DAR}. Left: latency speedup over a naive implementation as a function of the number of routed sources $N$. Right: activation-memory saving, shown for the dynamic/static variants and the forward/backward passes.}
\label{fig:infra_benchmark}
\end{figure*}

A naive implementation of \textsc{DAR}'s vertical aggregation $h_l = \sum_{i<l} \alpha_{i\to l}(t)\,v_i$ decomposes into separate kernels for per-source RMSNorm, query--key dot product, softmax, and weighted sum, each launching its own CUDA kernel, materializing $[N, B, T, D]$-shaped intermediates in HBM, and reading the source tensor four times per forward pass; since $N$ scales with depth, this baseline is both memory- and bandwidth-bound. We collapse the entire forward path into a single Triton kernel that uses an online-softmax recurrence to fuse the normalization constant with the weighted accumulator in one streaming loop over the $N$ sources, so that $\{v_i\}_{i<l}$ is read from HBM exactly once and all per-source intermediates --- $\mathrm{RMS}(v_i)$, $k_i$, $q^{\top}k_i$, $\exp(\cdot)$ --- live entirely in registers. The backward kernel applies the same idea in two passes: it first streams the sources to recover the softmax statistics $(m, Z, s)$, then streams them again to recompute the RMSNorm intermediates on the fly and emit $\partial\mathcal{L}/\partial v_i$, $\partial\mathcal{L}/\partial q$, and $\partial\mathcal{L}/\partial w_{\textsc{norm}}$ in two HBM reads instead of four to five; we further fuse the downstream LayerNorm and adaLN \texttt{modulate} into the same kernel. Microbenchmarks at the SiT-XL/2 working point ($N=57$) in Fig.~\ref{fig:infra_benchmark} show the fused kernel reducing forward latency from $22.5$\,ms to $1.96$\,ms ($11.5\times$) and backward from $115.8$\,ms to $13.6$\,ms ($8.5\times$) for the dynamic variant, with peak activation memory dropping by $78.7\%$ in the forward and $74.6\%$ in the backward pass (and up to $82.1\%$ for the static variant); the savings grow monotonically with $N$, keeping the chunked aggregator viable as DiTs scale to deeper backbones. The fused kernel is numerically equivalent to the reference PyTorch path up to floating-point reordering and serves as a drop-in replacement used throughout the main paper.

\section{Limitations and Future Work}
\label{limitations}

We view the most compelling next step as pushing \textsc{DAR} along the two scale axes that dominate modern generative Transformers: large-scale pretraining and large-scale post-training. On the pretraining side, state-of-the-art T2I and T2V backbones such as MM-DiT~\citep{esser2024scaling}, Qwen-Image~\citep{wu2025qwen}, FLUX~\citep{flux2024}, and HunyuanVideo~\citep{kong2024hunyuanvideo} routinely scale to several billion parameters with substantially deeper Transformer stacks, where the PreNorm-dilution symptoms diagnosed in Section~\ref{sec:diagnose} should manifest more severely than in SiT-XL/2; Eq.~\eqref{eq:S_star} further predicts that the optimal chunk size grows as $\sqrt{L}$, hinting that the headroom for \textsc{DAR} may widen rather than saturate with depth, which makes a systematic scaling study on multi-billion-parameter MM-DiT and video-DiT pretraining the most natural and informative follow-up direction we envision. On the post-training side, our preliminary DMD~\citep{yin2024one,yin2024improved} experiment on Qwen-Image equipped with \textsc{DAR} (Appendix~\ref{t2i}) suggests that the better-conditioned gradient flow afforded by adaptive routing has a detail-preserving effect on otherwise brittle distillation procedures where the vanilla counterpart diverges, and we plan to extend this observation to a broader family of post-training objectives---including supervised fine-tuning, RL-style preference optimization, and few-step distillation---across multiple large-scale T2I and T2V backbones, to assess whether \textsc{DAR} can serve as a general-purpose technique for the increasingly diverse landscape of diffusion post-training.



\end{document}